\documentclass[a4paper]{svproc} 

\usepackage[T1]{fontenc}

\usepackage{graphicx}
\usepackage{wrapfig}

\usepackage{hyperref}
\usepackage{color}

\urlstyle{rm}

\usepackage{mathtools}
\usepackage{amsfonts}
\newtheorem{assumption}{Assumption}
\usepackage{xfrac}
\usepackage{multirow}
\usepackage{algorithm}
\usepackage{algpseudocode}

\begin{document}
\setlength{\abovedisplayskip}{5pt}
\setlength{\belowdisplayskip}{5pt}
\emergencystretch=20pt

\title{Embodied Active Learning of Generative Sensor-Object Models}
\author{Allison Pinosky 
\and
Todd D. Murphey  
}
\authorrunning{A. Pinosky and T.D. Murphey}
\institute{Northwestern University, Evanston, IL 60208 \email{apinosky@u.northwestern.edu}}
\maketitle

\begin{abstract}
When a robot encounters a novel object, how should it respond---what data should it collect---so that it can find the object in the future? In this work, we present a method for learning image features of an unknown number of novel objects. To do this, we use active coverage with respect to latent uncertainties of the novel descriptions. We apply ergodic stability and PAC-Bayes theory to extend statistical guarantees for VAEs to embodied agents. We demonstrate the method in hardware with a robotic arm; the pipeline is also implemented in a simulated environment. Algorithms and simulation are available open source \url{https://sites.google.com/u.northwestern.edu/embodied-learning-hardware}.

\keywords{Robot Learning, AI-enabled Robotics}
\end{abstract}

\section{Introduction}

Deep generative learning has enabled large leaps in state-of-the-art performance on recognition tasks. These improvements have occurred alongside the widespread use of pre-existing datasets for training---most of which are based on images, videos, or text. 
For many interesting scenarios--such as performing tasks underwater or learning to use an ultrasound imager---there may be insufficient existing data to train an algorithm offline.
Embodied agents, like robots, could fill this gap by simultaneously collecting data and training models online. Furthermore, it would be useful for a robot to be able to generate new representations on-the-fly either when it encounters new objects or more broadly if it determines that its current representation is insufficient. 
In such cases, agents can either collect training data through random sampling or plan to actively collect future data samples using an information measure. 
For training data to be suitable for many generative methods, the data must contain
independent and identically distributed (\emph{i.i.d.}) samples with respect to some underlying true distribution. This assumption generally does not hold for real-world problems in which agents build their training sets sequentially. 
Particularly because embodied agents cannot move discontinuously, so trajectories are at least continuous and there is a real-world cost to movement.

\vspace{-1em}
\subsubsection{Our Approach}
In this work, we treat the robot as an \textit{active learner}, which has control over the data which is collected from the workspace. 
For examples in this work, the agent iteratively collects data and trains a variational autoencoder (VAE) model in real time. 
We restrict our robot to move in the planar workspace to collect data with an RGB camera (see Fig.\ref{fig:franka-real}), but the method is designed to scale to higher dimensions as well as other sensors. The robot has no pre-specified representation of camera properties or the objects in the workspace. 

Our approach is related to work on learning theory and active view selection.
As generative methods have grown in popularity, there have been an increasing number of papers that seek to understand the empirical success of these methods. Many works use the Probably Approximately Correct (PAC) learning framework as a baseline for establishing theoretical guarantees under \textit{i.i.d.} data assumptions ~\cite{foong2021tight,Shalizi2013,zhou2023toward,cherief2022pac,mbacke2024statistical}. 
While other related work focuses on establishing guarantees for non-\emph{i.i.d.} data \cite{gao2016learnability,mohri2007stability,zhang2023asymptotically}, we seek to collect data in a way that satisfies the data requirements of the learning algorithm.

Many object recognition tasks use \textit{next-best-view} to select the next view to evaluate during inference~\cite{chen2018veram,Wu2015shapenets,Johns2016pairwise}. This method could also be applied to the data collection process, but it is a greedy approach which risks converging to local minima and generating redundant samples. Instead, we use the active ergodic exploration method in \cite{abraham2020kle3,prabhakar2022mechanical} to optimally select the most informative next \emph{set of views}, while considering the current state of the robot, the previously collected data, and the current state of the learning representation.
We apply ergodic stability and PAC-Bayes Theory to extend statistical guarantees for VAEs to embodied agents.
The contributions of the presented work are 
\begin{enumerate}
\item a method for actively learning latent features of unknown objects, 
\item theoretical results on how ergodic stability of information states enables embodied learning, and
\item experimental results demonstrating the ability of our method to generate information-rich latent spaces through active learning.
\end{enumerate}

\setlength{\abovecaptionskip}{2pt plus 4pt minus 2pt}
\begin{figure}[tb]      
    \centering
    \includegraphics[width=0.9\columnwidth]{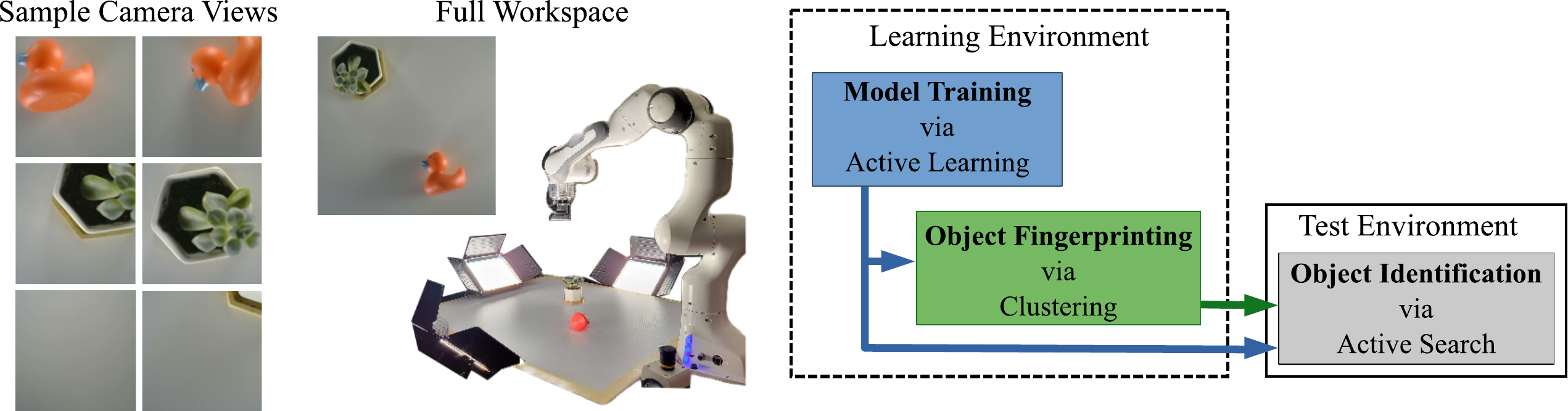} 
    \caption{
    \textbf{Test Environment.}
    Hardware experiments were performed on a robot arm with a webcam attached to the end-effector. The robot controls planar end-effector states $(x,y,\theta)$. 
    The goal is for the robot to explore a workspace while simultaneously building a representation of all observed objects in real-time. 
    It takes approximately $15$ minutes to run $3000$ exploration steps and $9000$ model updates. After learning, the model and learned objects can be used for future object identification tasks.
    }\label{fig:franka-real} 
    \vspace{-1.75em}
\end{figure}

The sections of the paper are organized as follows. In Section~\ref{sec:prelim}, we provide background on VAEs, PAC-Bayes theory, and ergodic control. In Section~\ref{sec:theory}, we present our method for extending these concepts to embodied agents with Conditional VAEs. In Sections \ref{sec:experiments} and \ref{sec:numerical-results}, we provide the methods and results for robot experiments which demonstrate the ability of our embodied agent to learn active latent spaces for object recognition. 

\section{Preliminaries}\label{sec:prelim}

In this section, we provide background necessary to prove how our method is able to learn in an embodied context.  
First, we introduce the variational autoencoder (VAE)~\cite{kingma2014adam,rezende2014stochastic}. Then, we review PAC-Bayes learning theory and prior work, which establishes statistical guarantees for VAEs~\cite{mbacke2024statistical}.  Finally, we present an active exploration method called ergodic control \cite{abraham2020kle3,miller2016ergodicexploration,mavrommati2018coverage}. 

\subsection{Variational Autoencoders}\label{sec:vae}
A VAE is a directed graphical model that learns a low-dimensional latent space in an unsupervised fashion by training two networks---an encoder and a decoder. The encoder compresses input data into a low-dimensional parameterized latent space distribution, and the decoder generates data predictions from the sampled latent space ~\cite{kingma2014auto,rezende2014stochastic}. 
VAEs are typically trained with the objective: 
\begin{equation}\label{eq:vae-loss}
    \mathcal{L}(\theta,\phi;y) 
    = \,\underbrace{ \mathbb{E}_{q_\phi(z|y)}[\log p_\theta(y|z)]}_\text{reconstruction loss} - \underbrace{\beta D_{KL}(q_\phi(z|y) \parallel p(z)) }_\text{latent space regularization} 
\end{equation}
where $y$ are sensor data (e.g. an image), $q_\phi(z|y)$ is the latent space distribution represented by encoder parameters $\phi$, $p_\theta(y|z)$ is the decoder network parameterized by $\theta$, $p(z)$ is the prior latent space distribution, $\beta$ is a hyperparameter, and $D_{KL}( \cdot \Vert \cdot)$ is the Kullback-Leibler (KL) Divergence. 

A key assumption of the VAE objective is that it is optimized over a complete dataset that contains \emph{i.i.d.} samples.\footnote{For mini-batch training, it is also assumed that data are drawn \emph{i.i.d.} from the dataset.}
Two learning behaviors often observed during VAE training are learning-lag and posterior collapse~\cite{he2018lagging,lucas2019understanding,Sonderby2016laddervae,yeung2017tackling}.
Learning lag refers to delay between learning the two loss terms, as the latent space learning often lags the reconstruction.
Posterior collapse describes instances where the latent space collapses to the prior, which is typically a normal distribution $\mathcal{N}(\mathbf{0},\mathbf{I})$.
When full posterior collapse occurs,
the VAE can learn a good generative model of the data (i.e. good decoder) but fail to learn good representations of the individual data points (i.e. poor encoded latent space). 
Prior work seeks to mitigate learning-lag and posterior collapse with batch normalization and $\beta$ annealing \cite{Ioffe2015batchorm,Bowman2016aneal,higgins2017betavae}. But parameter tuning is sensitive to the learning dataset. 

VAE networks can also be defined as functions, which we will use for the PAC-Bayes theory  presented in the next section \cite{mbacke2024statistical}. 
\begin{definition}[VAE Encoder and Decoder Networks]\label{def:vae-nets}
    Given a Euclidean instance space $\mathcal{Y}$ and a latent space $\mathcal{Z}=\mathbb{R}^d$, let $q_\phi(z|y)$ be a Gaussian latent space distribution $\mathcal{N}\left(\mu_\phi(y),\text{diag}\left(\sigma^2_\phi(y)\right)\right)$, where $\mu_\phi(y) : \mathcal{Y}\!\rightarrow\!\mathbb{R}^d$ and $\sigma_\phi(y) : \mathcal{Y}\!\rightarrow\!\mathbb{R}^d_{\geq 0}$.
    We can define the encoder and decoder as functions, such that 
    \begin{align}
        Q_\phi(y) & : \mathcal{Y}\!\rightarrow\!\mathbb{R}^{2d},  \text{ where } Q_\phi(y) =\begin{bmatrix} \mu_\phi(y) \\ \sigma_\phi(y)\end{bmatrix},         
        \\
        g_\theta(z) & : \mathcal{Z}\!\rightarrow\!\mathcal{Y}, \text{ where latent space samples }  z\sim q_\phi(z|y).
    \end{align}
\end{definition}
With this definition, we are ready to introduce PAC-Bayes theory.

\subsection{PAC-Bayes Theory}\label{sec:pac}

A class of functions is PAC-learnable if an algorithm can produce a function that recreates the input-output mapping of an arbitrary target function with high probability (at least $1-\delta$) and low error (at most $\epsilon$). PAC-Bayes extends PAC learning theory to probability distributions over a class of hypotheses and provides an upper bound on the model's empirical risk and its population class~\cite{catoni2003pac,mcallester1998some}. 
A further extension of PAC-Bayes to conditional priors for variational autoencoders was presented in~\cite{mbacke2024statistical}. In the rest of this section, we summarize the results from \cite{mbacke2024statistical}. 
The main theorem relies on the following two assumptions.
\begin{assumption}[Posterior and Loss Function Properties]\label{asmpt:functions}
    A distribution $y\mapsto q(\cdot|y)$ and a loss function $\ell$ satisfy Assumption~\ref{asmpt:functions} with constant $K>0$ if there exists a family $\mathcal{E}$ of functions $\mathcal{H}\!\rightarrow\!\mathbb{R}$ such that the following properties hold  
    \begin{enumerate}
        \item The function $y\mapsto q(\cdot|y)$ is continuous in the sense that for any $y_1,y_2 \in \mathcal{Y}$, $$d_\mathcal{E}(q(h|y_1),q(h|y_2)) \leq K d(y_1,y_2),$$ where $d_\mathcal{E}$ are Integral Probability Metrics (IMP, see \cite{muller1997integral}) defined on the family of functions $\mathcal{E}$ and $d$ is an underlying metric on $\mathcal{Y}$.
        \item The function $\ell(\cdot,y):\mathcal{H}\!\rightarrow\!\mathbb{R}$ is in $\mathcal{E}$ for any $y\in\mathcal{Y}$.
    \end{enumerate}
\end{assumption}
\begin{assumption}[VAE Encoder and Decoder]\label{asmpt:lipshitz-VAE}
    The encoder and decoder are Lipschitz-continuous w.r.t. their inputs meaning there exist real numbers \newline ${K_{\phi}, K_\theta > 0}$ such that for any $y_1, y_2 \in \mathcal{Y}$ and $z_1, z_2 \in \mathcal{Z}$,
    \begin{align}
        \Vert Q_\phi (y_1) - Q_\phi (y_2)\Vert & \leq K_{\phi} \Vert y_1 - y_2 \Vert 
        \\
        \Vert g_\theta (z_1) - g_\theta (z_2) \Vert & \leq K_\theta \Vert z_1 - z_2 \Vert ,
    \end{align}
    where $\Vert\cdot\Vert$ denotes the $L_2$ norm. 
\end{assumption}
We define the loss function as
\begin{align}
    \ell^\theta : \mathcal{Z} \times \mathcal {Y} \rightarrow [0,\infty) \text{ where } \ell^\theta(z,y) = \Vert y - g_\theta(z)\Vert.
\end{align}
Assumption~\ref{asmpt:functions} is formulated generally for conditional distributions.  We require a further statement to show that for the VAE loss function $\ell^\theta(z,y)$ there is a family $\mathcal{E}$ for which the continuity assumption is satisfied with constant $K$.
\begin{lemma}[VAE Satisfies Assumption~\ref{asmpt:functions}]\label{lemma:vae}
    For a VAE with parameters $\phi$ and $\theta$ and let $K_\phi,K_\theta \in \mathbb{R}$ be the Lipschitz norms of the encoder and decoder respectively. Then the variational distribution $q_\phi(z|y)$  satisfies Assumption~\ref{asmpt:functions} with $\mathcal{E} = \{f:\mathcal{Z} \rightarrow\mathbb{R} \text{ s.t. } \Vert f \Vert_\text{Lip} \leq K_\theta\}$,  $ \ell = \ell^\theta(z,y)$, and $K =  K_\phi K_\theta $.
\end{lemma}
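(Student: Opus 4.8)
The plan is to verify the two parts of Assumption~\ref{asmpt:functions} separately, treating the membership property (part~2) first since it is nearly immediate. Fixing $y$, I would show that $\ell^\theta(\cdot,y):\mathcal{Z}\to\mathbb{R}$ lies in the chosen family $\mathcal{E}$, i.e. is $K_\theta$-Lipschitz. By the reverse triangle inequality, $|\ell^\theta(z_1,y)-\ell^\theta(z_2,y)| = \bigl|\,\|y-g_\theta(z_1)\| - \|y-g_\theta(z_2)\|\,\bigr| \leq \|g_\theta(z_1)-g_\theta(z_2)\|$, and Assumption~\ref{asmpt:lipshitz-VAE} bounds the right-hand side by $K_\theta\|z_1-z_2\|$. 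Hence $\|\ell^\theta(\cdot,y)\|_{\text{Lip}}\leq K_\theta$, so $\ell^\theta(\cdot,y)\in\mathcal{E}$ for every $y$.

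The substantive content is part~1, the continuity of $y\mapsto q_\phi(\cdot|y)$. The key observation is that the IPM $d_\mathcal{E}$ over the class $\mathcal{E}=\{f:\|f\|_{\text{Lip}}\le K_\theta\}$ is exactly $K_\theta$ times the Wasserstein-1 distance: by Kantorovich--Rubinstein duality, $\sup_{\|f\|_{\text{Lip}}\le 1}|\mathbb{E}_P f - \mathbb{E}_Q f| = W_1(P,Q)$, so rescaling by $K_\theta$ gives $d_\mathcal{E}(P,Q)=K_\theta\,W_1(P,Q)$. It therefore suffices to prove $W_1(q_\phi(\cdot|y_1),q_\phi(\cdot|y_2))\le K_\phi\,d(y_1,y_2)$, after which multiplying by $K_\theta$ yields the claimed constant $K=K_\phi K_\theta$.

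To bound the Wasserstein distance I would construct an explicit coupling via the reparameterization trick: draw a single $\epsilon\sim\mathcal{N}(\mathbf{0},\mathbf{I})$ and set $z_i=\mu_\phi(y_i)+\sigma_\phi(y_i)\odot\epsilon$ (element-wise product), so that each $z_i\sim q_\phi(\cdot|y_i)$ and $(z_1,z_2)$ is a valid coupling. Since $W_1$ is an infimum over couplings, $W_1 \le \mathbb{E}\|z_1-z_2\|$. Applying Jensen's inequality \emph{before} splitting the two contributions is the crucial step: $\mathbb{E}\|z_1-z_2\| \le \sqrt{\mathbb{E}\|z_1-z_2\|^2}$, and because the cross term $\mathbb{E}[\langle \mu_\phi(y_1)-\mu_\phi(y_2),(\sigma_\phi(y_1)-\sigma_\phi(y_2))\odot\epsilon\rangle]$ vanishes and $\mathbb{E}[\epsilon_i^2]=1$, the second moment collapses to $\|\mu_\phi(y_1)-\mu_\phi(y_2)\|^2+\|\sigma_\phi(y_1)-\sigma_\phi(y_2)\|^2 = \|Q_\phi(y_1)-Q_\phi(y_2)\|^2$, using the stacked form of $Q_\phi$ from Definition~\ref{def:vae-nets}. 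A final appeal to the encoder Lipschitz bound in Assumption~\ref{asmpt:lipshitz-VAE} gives $\|Q_\phi(y_1)-Q_\phi(y_2)\|\le K_\phi\|y_1-y_2\|$, completing the chain.

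The main obstacle is getting the constant exactly equal to $K_\phi K_\theta$ rather than a looser multiple: a naive triangle-inequality split of $\|z_1-z_2\|$ into mean and standard-deviation parts produces $\|\Delta\mu\|+\|\Delta\sigma\|$, which only bounds $\|Q_\phi(y_1)-Q_\phi(y_2)\|$ up to a $\sqrt{2}$ factor. Taking the expectation of $\|z_1-z_2\|^2$ first---so that the cross term integrates to zero and the mean and variance contributions recombine as a single Euclidean norm---is what recovers the sharp constant. I would also note that the argument uses only that $\epsilon$ has zero mean and identity covariance, so the Gaussian form of $q_\phi$ enters solely through the affine reparameterization.
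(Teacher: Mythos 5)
Your proof is correct and achieves the sharp constant $K = K_\phi K_\theta$. One contextual note: the paper does not actually prove Lemma~\ref{lemma:vae} itself---it is quoted from the cited prior work---but it does prove the conditional analogue, Lemma~\ref{lemma:cvae}, and your argument follows the same skeleton as that proof: identify the IPM $d_\mathcal{E}$ over $\{f : \Vert f\Vert_\text{Lip} \leq K_\theta\}$ with $K_\theta W_1$ via Kantorovich--Rubinstein duality, bound the Wasserstein distance between the two posteriors using the encoder Lipschitz property, and dispose of part~2 by the (reverse) triangle inequality plus the decoder Lipschitz bound. The one genuine difference is the middle step: the paper invokes the closed-form expression $W_2(p,q)^2 = \Vert\mu_p-\mu_q\Vert^2 + \Vert\sigma_p-\sigma_q\Vert^2$ for diagonal Gaussians together with $W_1 \leq W_2$, whereas you re-derive that bound from scratch by exhibiting the common-noise reparameterization coupling $z_i = \mu_\phi(y_i) + \sigma_\phi(y_i)\odot\epsilon$ and applying Jensen's inequality so that the cross term vanishes and the mean and scale contributions recombine into $\Vert Q_\phi(y_1)-Q_\phi(y_2)\Vert$. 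What your route buys: it is self-contained (no appeal to the Gaussian $W_2$ formula), and, as you observe, it uses only that $\epsilon$ has zero mean and identity covariance, so the lemma extends verbatim to any location--scale posterior family. What the paper's route buys: brevity, since the coupling computation is exactly what underlies the cited closed form (indeed, for coordinate-aligned diagonal Gaussians the common-noise coupling is optimal, so your inequality is an equality). Your remark about the $\sqrt{2}$ loss from a naive triangle-inequality split is also correct, and it is precisely the reason both routes pass through the squared norm rather than splitting $\Vert z_1 - z_2\Vert$ directly.
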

With these assumptions, we can introduce a PAC-Bayes bound for the VAE reconstruction loss. 
\begin{theorem}[VAE PAC-Bayes Bounds]\label{thm:PAC-VAE}
    Let $\mathcal{Y}$ be the instance space, $\zeta \in \mathcal{M}^1_+(\mathcal{Y})$ the data-generating distribution, $Z$ the latent space, $p(z) \in \mathcal{M}^1_+(\mathcal{Z})$ the prior distribution on the latent space, $\theta$ the decoder parameters, and $\delta \in (0,1)$, $\lambda > 0$ be real numbers. With probability at least $1-\delta$ over $S \stackrel{\text{\emph{i.i.d.}}}{\sim} \zeta$, the following holds for any posterior $ q_\phi (z | y )$: 
    \begin{align}
        \begin{split}
        &\mathop{\mathbb{E}}_{y\sim\zeta}\mathop{\mathbb{E}}_{z \sim q_\phi(z|y)} \ell^\theta(z,y) \!\leq\!
        \frac{1}{n}\sum^n_{i=1}  \mathop{\mathbb{E}}_{z \sim q_\phi(z|y_i)} \ell^\theta(z,y_i)  
        \!+\! \frac{1}{\lambda} \Bigg[ 
        \sum^n_{i=1} D_{KL}( q_\phi(z|y_i) \Vert p(z)) 
        \\ &
        \!+\!  \log{\frac{1}{\delta}} 
        \!+\! \frac{\lambda K_\phi K_\theta}{n} \sum^n_{i=1} \mathop{\mathbb{E}}_{y\sim\zeta} d(y,y_i) 
        \!+\! n \log \mathop{\mathbb{E}}_{z \sim p(z)}\mathop{\mathbb{E}}_{y\sim\zeta} e^{\frac{\lambda}{n}(\mathop{\mathbb{E}}_{y'\sim\zeta}\ell^\theta(z,y')-\ell^\theta(z,y))}
         \Bigg],
        \end{split}
    \end{align}
    where $K_\phi,K_\theta$ are  encoder and decoder Lipschitz norms and $d(y,y'){=}\Vert y-y'\Vert$.
\end{theorem}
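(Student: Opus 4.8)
The plan is to follow the Catoni-style PAC-Bayes recipe, adapted to the \emph{conditional} posterior $q_\phi(z|y)$ of the VAE. Write $R = \mathbb{E}_{y\sim\zeta}\mathbb{E}_{z\sim q_\phi(z|y)}\ell^\theta(z,y)$ for the population reconstruction risk (the left-hand side) and $\hat R = \frac{1}{n}\sum_{i=1}^n \mathbb{E}_{z\sim q_\phi(z|y_i)}\ell^\theta(z,y_i)$ for the empirical risk (the first right-hand term). Three ingredients drive the argument: the Donsker--Varadhan change-of-measure inequality, the continuity guaranteed by Lemma~\ref{lemma:vae}, and Markov's inequality for the high-probability statement.

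First I would apply the change-of-measure inequality on the product space $\mathcal{Z}^n$ with prior $p(z)^{\otimes n}$ and variational measure $\prod_{i=1}^n q_\phi(z|y_i)$. Because both are product measures, the relative entropy factorizes as $\sum_{i=1}^n D_{KL}(q_\phi(z|y_i)\Vert p(z))$, producing the KL term of the bound. Choosing the test function $\tfrac{\lambda}{n}\sum_i\bigl(\mathbb{E}_{y'\sim\zeta}\ell^\theta(z_i,y') - \ell^\theta(z_i,y_i)\bigr)$ yields, on the left, $\lambda$ times the gap between a ``cross'' term $\frac{1}{n}\sum_i\mathbb{E}_{z\sim q_\phi(z|y_i)}\mathbb{E}_{y'\sim\zeta}\ell^\theta(z,y')$ and $\hat R$, and on the right the exponential-moment term $\sum_i\log\mathbb{E}_{z\sim p(z)}e^{\frac{\lambda}{n}(\cdots)}$.

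The crux---and the step I expect to be the main obstacle---is relating the cross term back to the true population risk $R$, since in $R$ the latent is drawn from the posterior conditioned on the \emph{same} $y$ at which the loss is evaluated, whereas the cross term decouples them. Here I would invoke Lemma~\ref{lemma:vae}: because $g_\theta$ is $K_\theta$-Lipschitz, $z\mapsto\ell^\theta(z,y')$ lies in $\mathcal{E}$, so the IPM continuity of $y\mapsto q_\phi(\cdot|y)$ gives $\bigl|\mathbb{E}_{z\sim q_\phi(z|y_i)}\ell^\theta(z,y') - \mathbb{E}_{z\sim q_\phi(z|y')}\ell^\theta(z,y')\bigr|\le K_\phi K_\theta\, d(y_i,y')$. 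Integrating over $y'\sim\zeta$ and averaging over $i$ lower-bounds the cross term by $R - \frac{K_\phi K_\theta}{n}\sum_i\mathbb{E}_{y'\sim\zeta}d(y_i,y')$, so that after substitution the population risk $R$ appears on the left and the continuity term of the bound appears on the right.

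Finally I would convert the random exponential-moment term into its stated form. Writing $f(y)=\log\mathbb{E}_{z\sim p(z)}e^{\frac{\lambda}{n}(\mathbb{E}_{y'\sim\zeta}\ell^\theta(z,y')-\ell^\theta(z,y))}$, Markov's inequality applied to $\exp(\sum_i f(y_i))$, together with independence of the sample (so the expectation factorizes as an $n$-th power) and Fubini (to swap $\mathbb{E}_{z\sim p(z)}$ and $\mathbb{E}_{y\sim\zeta}$), shows that with probability at least $1-\delta$ we have $\sum_i f(y_i)\le \log\frac{1}{\delta} + n\log\mathbb{E}_{z\sim p(z)}\mathbb{E}_{y\sim\zeta}e^{\frac{\lambda}{n}(\cdots)}$. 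Collecting the KL term, the continuity term, and this confidence term, then dividing through by $\lambda$ after isolating $R$, yields the claimed inequality.
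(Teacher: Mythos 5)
Your proposal is correct and takes essentially the same route as the paper, which states this theorem as a summary of Theorem 3.1 of \cite{mbacke2024statistical} and sketches precisely this argument: Donsker--Varadhan change of measure applied to the $n$ posteriors against the product prior $p(z)^{\otimes n}$ (giving the factorized KL term), reduction of the $n$-latent-variable exponential moment to a single-latent-variable one via i.i.d.\ factorization and Markov's inequality (giving the $\log\frac{1}{\delta}$ and moment terms), and the IPM/Lipschitz continuity of Lemma~\ref{lemma:vae} to pass from the decoupled cross term back to the population risk (giving the $K_\phi K_\theta$ term). No gaps.
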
  
Similar to the VAE loss function, these bounds assume \emph{i.i.d.} data. 

In our embodied problem, the robot must collect data to train its learning representation. 
A naive approach to data collection could use random sampling of the test environment, but if the environment is sparse, many samples will be required to fully cover the space and many of the samples may be redundant or uninformative. Furthermore, there is no guarantee that data which is \emph{i.i.d.} with respect to workspace states would also be \emph{i.i.d.} with respect to the sensor data. Next, we introduce the control method which enables embodied data collection.

\subsection{Ergodic Control}\label{sec:control} 

In robotics, ergodic control treats the problem of exploration and exploitation as a problem of matching a target spatial distribution to a time-averaged distribution. 
As in~\cite{abraham2020kle3,miller2016ergodicexploration,mavrommati2018coverage}, one can calculate a controller that optimizes the ergodic metric such that the trajectory of the robot is ergodic with respect to a target distribution. 
In this work, we use a receding-horizon controller based on minimizing a KL-ergodic measure detailed in~\cite{abraham2020kle3} to synthesize future controls. 
The controller optimization minimizes the KL-divergence $D_{KL}(h \Vert d)$,  where $h$ is a target spatial distribution and  $d$ is the time-average statistics of the robot.
The time-average statistics $d$ of the robot are defined as
\begin{align}
    d(s | x(t)) &=\frac{1}{{T}_{r}}\int_{{t}_{i}-{T}_{r}}^{{t}_{i}+T} \frac{1}{\eta} \psi (s | x(t))dt \label{eq:time_average_statistics}
\end{align}
where $\psi(s|x(t))=\exp(-\frac{1}{2} \lVert s - x(t) \rVert^{2}_{\Sigma^{-1}}) $, $\eta$ is a normalization factor, $s$ are states drawn from the reachable workspace, $T$ is the planning horizon, and  $\Sigma$ specifies the width of the state coverage Gaussian. 

We will discuss our choice of $h$ in Sec.~\ref{sec:active-learning}. For now, the key idea is that if we specify the target distribution such that it represents the distribution of sensor data in the test environment, ergodic control can be used to collect data to match the target distribution.
Now, we are ready to present our extension of VAE bounds from Theorem \ref{thm:PAC-VAE} to embodied learning of conditional networks.

\section{Embodied CVAE Learning}\label{sec:theory}

Our embodied problem necessitates a connection between the physical states of the robot and the sensor data---the VAE requires \textit{i.i.d} sensor data, and the robot requires states to travel to to collect the sensor data. To connect these elements, we use a VAE variant called a conditional VAE (CVAE) as our learning representation. 
CVAEs augment the general VAE network architecture by adding conditional variables as inputs to both the encoder and decoder networks \cite{prabhakar2022mechanical,sohn2015learning}. 
CVAEs are typically trained with the objective function: 
\begin{equation}\label{eq:cvae-loss}
    \mathcal{L}(\theta,\phi;x,y) 
    = \, \underbrace{ \mathbb{E}_{q_\phi(z|x,y)}[\log p_\theta(y|z,x)]}_\text{reconstruction loss} - \underbrace{\beta D_{KL}(q_\phi(z|x,y) \parallel p(z)) }_\text{latent space regularization} 
\end{equation}
where $y$ are sensor data (e.g. an image), $x$ are conditional variables (e.g. robot state), $q_\phi(z|x,y)$ is the latent space distribution represented by encoder parameters $\phi$, and $p_\theta(y|z,x)$ is the decoder parameterized by $\theta$~\cite{kingma2014auto}. 

Similar to VAEs, CVAEs are susceptible to learning-lag and posterior collapse. The addition of conditional variables to the CVAE decoder means full posterior collapse can result in the decoder network 
learning a direct mapping from conditional variables to predicted sensor outputs.
The benefit of using a VAE is that the latent space models the correlations between samples. If the latent space fully collapses, we lose access to these correlations. In this work, the goal is to learn information-rich latent spaces, which could be used to solve future tasks like feature clustering and object recognition \cite{ranzato2007unsupervised,higgins2017betavae}. 
Therefore, considering the quality of the data collection process is necessary to prevent catastrophic posterior collapse, particularly in applications with real-time data collection. 

To generate PAC-Bayes bounds, we need to define our CVAE networks as functions similar to Def.~\ref{def:vae-nets}. The primary change from the previous definition is the addition of conditional state as an input to each function.
\begin{definition}[CVAE Encoder and Decoder Networks]\label{def:cvae-nets}
    Given Euclidean instance spaces  $\mathcal{X}$ and  $\mathcal{Y}$ and latent space $\mathcal{Z}=\mathbb{R}^d$, let $q_\phi(z|x,y)$ be a Gaussian latent space distribution $\mathcal{N}(\mu_\phi(x,y),\text{diag}(\sigma^2_\phi(x,y)))$, where 
    $\mu_\phi(x,y) :  \mathcal{X} \times \mathcal{Y}\!\rightarrow\!\mathbb{R}^d$ and $\sigma_\phi(x,y) : \mathcal{X} \times \mathcal{Y}\!\rightarrow\!\mathbb{R}^d_{\geq 0}$.
    We can define the encoder and decoder as functions:  
    \begin{align}
        Q_\phi(x,y) &: \mathcal{X} \times \mathcal{Y}\!\rightarrow\!\mathbb{R}^{2d},  \text{ where } Q_\phi(x,y) =\begin{bmatrix} \mu_\phi(x,y) \\ \sigma_\phi(x,y)\end{bmatrix},
        \\
        g_\theta(z,x) &  : \mathcal{X} \times \mathcal{Z}\!\rightarrow\!\mathcal{Y},  \text{ where latent space samples } z\sim q_\phi(z|x,y).
    \end{align}
\end{definition}
We must also modify Assumption~\ref{asmpt:functions} to include conditional states. 
\begin{assumption}[CVAE Encoder and Decoder]\label{asmpt:lipshitz-CVAE}
    The encoder and decoder are Lipschitz-continuous w.r.t. their inputs meaning there exist real numbers \newline ${K_\phi,K_\theta > 0}$ such that for any $x_1,x_2 \in \mathcal{X}$, $y_1,y_2 \in \mathcal{Y}$, and $z_1,z_2 \in \mathcal{Z}$,
    \begin{align}
        \Vert Q_\phi (x_1,y_1) - Q_\phi (x_2,y_2) \Vert & \leq K_{\phi} \left( \Vert x_1 - x_2 \Vert + \Vert y_1 - y_2 \Vert \right), \label{eq:q-lip} 
        \\
        \Vert g_\theta (z_1,x_1) - g_\theta (z_2,x_2) \Vert & \leq K_\theta \left( \Vert z_1 - z_2 \Vert + \Vert x_1 - x_2 \Vert \right).
    \end{align}
\end{assumption}
We define the CVAE loss function as
\begin{align}
    \ell^\theta : \mathcal{Z} \times \mathcal{X} \times \mathcal {Y} \rightarrow [0,\infty) \text{ where } \ell^\theta(z,x,y) = \Vert y - g_\theta(z,x)\Vert.
\end{align}
An additional statement is required to show that for $\ell^\theta(x,y,z)$ there is a family $\mathcal{E}$ for which the continuity assumption is satisfied with constant $K$.
The proof relies on the Wasserstein distance, which we define prior to the lemma.
\begin{definition}[Wasserstein distance~\cite{givens1984class}]
    Given empirical distributions $P$ and $Q$ with samples $x_1 \dots x_n$ and $y_1 \dots y_n$ respectively, the distance measure is
    \begin{equation}
        W_p(P,Q) = \left(\frac{1}{n}\sum\nolimits^n_{i=1} \Vert x_i - y_i \Vert^p\right)^{1/p}.
    \end{equation} 
    Given normal distributions $p=\mathcal{N}(\mu_p,\text{diag}(\sigma_p^2))$ and $q=\mathcal{N}(\mu_q,\text{diag}(\sigma_q^2))$, the distance measure is  
    \begin{equation}
        W_2(p,q)^2= \Vert \mu_p - \mu_q \Vert^2 + \Vert \sigma_p - \sigma_q \Vert^2.
    \end{equation} 
\end{definition}
\begin{lemma}[CVAE Satisfies Assumption \ref{asmpt:functions}]\label{lemma:cvae}
    For a CVAE with parameters $\phi$ and $\theta$, let $K_\phi,K_\theta {\,\in\,} \mathbb{R}$ be the Lipschitz norms of the encoder and decoder respectively. Then the variational distribution $q_\phi(z|x,y)$  satisfies Assumption \ref{asmpt:functions} with
    \begin{align}
        d_\mathcal{E}(q_\phi(z| x_1, y_1), q_\phi(z | x_2, y_2)) & \leq K_\phi K_\theta \left(\Vert x_1 \!-\! x_2 \Vert{+}\Vert y_1 \!-\! y_2 \Vert\right) {+} K_\theta \Vert x_1 \!-\! x_2 \Vert
        \\
    \text{ and }
        \ell^\theta(z,x,y) & \in \mathcal{E} \text{ for } \text{any } x\in\mathcal{X} \text{ and } y\in\mathcal{Y},
    \end{align}
    where $\mathcal{E} = \{f:\mathcal{Z} \times \mathcal{X} \rightarrow\mathbb{R} \text{ s.t. } \Vert f \Vert_\text{Lip} \leq K_\theta\}$.
\end{lemma}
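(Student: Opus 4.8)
The plan is to verify the two clauses of Assumption~\ref{asmpt:functions} separately, following the template of Lemma~\ref{lemma:vae} but carefully tracking the new dependence of the test functions in $\mathcal{E}$ on the conditional variable $x$. The Integral Probability Metric associated with $\mathcal{E}$ is $d_\mathcal{E}(P,Q)=\sup_{f\in\mathcal{E}}|\mathbb{E}_{z\sim P}f-\mathbb{E}_{z\sim Q}f|$, and because every $f\in\mathcal{E}$ is now a function of both $z$ and $x$ while the two posteriors carry different conditioning inputs $x_1,x_2$, the object I must bound is $\sup_{f\in\mathcal{E}}|\mathbb{E}_{z\sim q_\phi(z|x_1,y_1)}f(z,x_1)-\mathbb{E}_{z\sim q_\phi(z|x_2,y_2)}f(z,x_2)|$, where the Lipschitz norm defining $\mathcal{E}$ is taken with respect to the additive metric $\Vert z_1-z_2\Vert+\Vert x_1-x_2\Vert$ on $\mathcal{Z}\times\mathcal{X}$ inherited from the decoder bound in Assumption~\ref{asmpt:lipshitz-CVAE}.

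For the continuity clause I would insert the hybrid term $\mathbb{E}_{z\sim q_\phi(z|x_2,y_2)}f(z,x_1)$ and split the supremand by the triangle inequality into (i) a distribution-shift term with the conditioning input held fixed at $x_1$, and (ii) an input-shift term evaluated under the single distribution $q_\phi(z|x_2,y_2)$. For (i), fixing $x=x_1$ makes $f(\cdot,x_1)$ a $K_\theta$-Lipschitz function of $z$ alone, so Kantorovich--Rubinstein duality bounds the term by $K_\theta W_1$ and hence by $K_\theta W_2$ of the two Gaussians. I would then invoke the closed form $W_2(q_1,q_2)^2=\Vert\mu_1-\mu_2\Vert^2+\Vert\sigma_1-\sigma_2\Vert^2$ and recognize the right-hand side as $\Vert Q_\phi(x_1,y_1)-Q_\phi(x_2,y_2)\Vert^2$, which Assumption~\ref{asmpt:lipshitz-CVAE} controls by $K_\phi(\Vert x_1-x_2\Vert+\Vert y_1-y_2\Vert)$, producing the term $K_\phi K_\theta(\Vert x_1-x_2\Vert+\Vert y_1-y_2\Vert)$. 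For (ii), the joint $K_\theta$-Lipschitz property gives $|f(z,x_1)-f(z,x_2)|\le K_\theta\Vert x_1-x_2\Vert$ pointwise in $z$, and taking the expectation preserves the bound, yielding the additional $K_\theta\Vert x_1-x_2\Vert$. Summing (i) and (ii) and passing to the supremum over $f$ recovers exactly the claimed inequality.

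For the second clause I would show $\ell^\theta(\cdot,\cdot,y)\in\mathcal{E}$ for each fixed $y$ by a direct reverse-triangle-inequality argument: for any $(z_1,x_1),(z_2,x_2)$ we have $|\Vert y-g_\theta(z_1,x_1)\Vert-\Vert y-g_\theta(z_2,x_2)\Vert|\le\Vert g_\theta(z_1,x_1)-g_\theta(z_2,x_2)\Vert$, which the decoder Lipschitz bound controls by $K_\theta(\Vert z_1-z_2\Vert+\Vert x_1-x_2\Vert)$, so $\Vert\ell^\theta(\cdot,\cdot,y)\Vert_\text{Lip}\le K_\theta$ and $\ell^\theta(\cdot,\cdot,y)$ lies in $\mathcal{E}$.

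The step I expect to be the main obstacle is the bookkeeping in the continuity clause: recognizing that the IPM must compare the two posteriors at their \emph{respective} conditioning inputs, and that the hybrid split is what isolates the genuinely new contribution. This is precisely where the argument departs from the unconditional Lemma~\ref{lemma:vae}; the additive $K_\theta\Vert x_1-x_2\Vert$ term is a direct consequence of the test functions in $\mathcal{E}$ now carrying an $x$-argument, and getting its constant right requires the \emph{joint} Lipschitz property of $f$ rather than its $z$-Lipschitz property alone. The remaining manipulations---Kantorovich--Rubinstein duality, the inequality $W_1\le W_2$, and the Gaussian $W_2$ formula---are routine once the split is in place.
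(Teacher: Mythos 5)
Your proposal is correct and follows essentially the same route as the paper's proof: the Gaussian closed form for $W_2$ combined with the encoder Lipschitz bound, Kantorovich duality with $W_1 \leq W_2$ for the IPM, and the decoder Lipschitz bound via the (reverse) triangle inequality for membership of $\ell^\theta$ in $\mathcal{E}$. Your hybrid-term decomposition is in fact a more careful justification of the step the paper states as the identity in Eq.~\ref{eq:w1_ineq}, cleanly separating the $z$-distribution shift (handled by duality) from the $x$-argument shift (handled by the joint Lipschitz property), but the underlying argument is the same.
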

\begin{proof}\label{proof:cvae_lemma} 
In Def.~\ref{def:cvae-nets}, we defined $q_\phi(z|x,y)$ as a normal distribution and $Q_\phi(x,y) =\begin{bmatrix} \mu_\phi(x,y), \sigma_\phi(x,y)\end{bmatrix}^T$. Since the Wasserstein-2 distance for normal distributions has a closed form solution and $Q$ is Lipschitz continuous as defined in Eq.~\ref{eq:q-lip}, we can define the following inequality
\begin{equation}\label{eq:w2_ineq}
     W_2(q_\phi(z|x_1,y_1) ,q_\phi(z|x_2,y_2) ) \leq K_{\phi} \left( \Vert x_1 - x_2 \Vert + \Vert y_1 - y_2 \Vert \right) .
\end{equation}
Using the definition of the family of functions $\mathcal{E} = \{f:\mathcal{Z} \times \mathcal{X} \rightarrow\mathbb{R} \text{ s.t. } \Vert f \Vert_\text{Lip} \leq K_\theta\}$ and the Kantorovich duality, we obtain the IMP $d_\mathcal{E}$ as
{
\medmuskip=0mu
\thinmuskip=0mu
\thickmuskip=0mu
\begin{equation}\label{eq:w1_ineq}
    d_\mathcal{E}(q_\phi(z| x_1, y_1), q_\phi(z | x_2, y_2)) {=} K_\theta (W_1(q_\phi(z|x_1,y_1) ,q_\phi(z|x_2,y_2)) {+} \Vert x_1 {-} x_2 \Vert).
\end{equation}
}
Since $W_1 \leq W_2$, combining Eq.~\ref{eq:w1_ineq}  and  Eq.~\ref{eq:w2_ineq} yields 
\begin{equation}
    d_\mathcal{E}(q_\phi(z| x_1, y_1), q_\phi(z | x_2, y_2)) \!\leq\! K_\phi K_\theta \left(\Vert x_1 \!-\! x_2 \Vert+\Vert y_1 \!-\! y_2 \Vert\right) + K_\theta \Vert x_1 \!-\! x_2 \Vert.
\end{equation}
Now we can examine the second part of Lemma~\ref{lemma:cvae}. Let $\ell = \ell^\theta $, $x_1,x_2 \in \mathcal{X}$, $y \in \mathcal{Y}$, and $z_1,z_2 \in \mathcal{Z}$. We have
\begin{align}
    \ell(z_1,x_1,&y) -  \ell(z_2,x_2,y) 
    = \Vert y - g_\theta(z_1,x_1) \Vert - \Vert y - g_\theta(z_2,x_2) \Vert \\
    &= \Vert y - g_\theta(z_1,x_1) + g_\theta(z_2,x_2)- g_\theta(z_2,x_2)\Vert - \Vert y - g_\theta(z_2,x_2) \Vert \\
    &\leq \Vert y- g_\theta(z_2,x_2) \Vert + \Vert  g_\theta(z,x) - g_\theta(z_1,x_1) \Vert - \Vert y - g_\theta(z_2,x_2) \Vert \\
    &= \Vert  g_\theta(z_2,x_2) - g_\theta(z_1,x_1) \Vert \\
    &\leq K_\theta \left( \Vert  z_2 - z_1 \Vert + \Vert  x_2 - x_1 \Vert \right)
\end{align}
where the first inequality uses the triangle inequality, and the second uses the Lipschitz assumption on $g_\theta$.
\qed
\end{proof}
With the first part of the lemma, we have established that the decoder is continuous with respect to the IMP $d_\mathcal{E}$ and the underlying metrics $\mathcal{X}$ and $\mathcal{Y}$. We have also shown that the loss function captures the continuity of the decoder. We can now prove a PAC-Bayes bound for the CVAE reconstruction loss.
\begin{theorem}[CVAE PAC-Bayes Bounds]\label{thm:PAC-CVAE}
    Let $\mathcal{X}$ and $\mathcal{Y}$ be the instance spaces, $\pi \in \mathcal{M}^1_+(\mathcal{X})$  and $\zeta \in \mathcal{M}^1_+(\mathcal{Y})$ the data-generating distributions, $Z$ the latent space, $p(z) \in \mathcal{M}^1_+(\mathcal{Z})$ the prior distribution, $\theta$ the decoder parameters, and $\delta \in (0,1)$, $\lambda > 0$ be real numbers. With probability at least $1-\delta$ over $S_x \stackrel{\text{\emph{i.i.d.}}}{\sim} \pi$~and~$S_y \stackrel{\text{\emph{i.i.d.}}}{\sim} \zeta$, the following holds for any posterior $ q_\phi (z | x, y )$: 
    \begin{align}
        & {\mathop{\mathbb{E}^{}}_{x\sim\pi}} \mathop{\mathbb{E}^{}}_{y\sim\zeta}\mathop{\; \mathbb{E} \; \ell^\theta(z,x,y)}_{ z \sim q_\phi(z|x,y) \hfill } \leq 
        \frac{1}{n}\sum^n_{i=1}  \mathop{\; \mathbb{E} \; \ell^\theta(z,x_i,y_i)}_{z \sim q_\phi(z|x_i,y_i) \hfill}   
        + \frac{1}{\lambda} \Bigg[  
        \sum^n_{i=1} D_{KL}( q_\phi(z|x_i,y_i) \Vert p(z)) \nonumber \\ &
        +  \log{\frac{1}{\delta}} 
        + \frac{\lambda K_\phi K_\theta}{n} \sum^n_{i=1} \left(\, \boxed{\mathop{\mathbb{E}}_{x\sim\pi} d(x,x_i)} + \mathop{\mathbb{E}}_{y\sim\zeta} d(y,y_i) \right)
        + \boxed{ \frac{\lambda K_\theta}{n} \sum^n_{i=1} \mathop{\mathbb{E}}_{x\sim\pi} d(x,x_i) }
        \nonumber  \\ &
        + n \log \mathop{\mathbb{E}}_{z \sim p(z)}{\mathop{\mathbb{E}}_{x\sim\pi} }\mathop{\mathbb{E}}_{y\sim\zeta} e^{{\textstyle \frac{\lambda}{n}}\big({\underset{x'\sim\pi}{\mathbb{E}}}\;\underset{y'\sim\zeta}{\mathbb{E}} \;\ell^\theta(z,x',y')-\ell^\theta(z,x,y)\big)}
         \Bigg],
    \end{align}
    where $K_\phi,K_\theta$ are  encoder and decoder Lipschitz norms and $d(y,y'){=}\Vert y-y'\Vert$.
\end{theorem}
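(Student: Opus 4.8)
The plan is to mirror the proof of the VAE bound in Theorem~\ref{thm:PAC-VAE} from \cite{mbacke2024statistical}, treating the conditioning variable $x$ in complete analogy with $y$ and importing the extra $x$-dependent terms directly from Lemma~\ref{lemma:cvae}. The bound is of Alquier type, so I expect three ingredients assembled into the single inequality divided by $\lambda$: a change-of-measure (Donsker--Varadhan) step that introduces the latent prior $p(z)$ and the per-sample terms $\sum_i D_{KL}(q_\phi(z|x_i,y_i)\Vert p(z))$; an i.i.d.\ concentration step (Markov's inequality on an exponential moment) that yields the $\log\tfrac{1}{\delta}$ term and collapses the per-sample moment-generating functions into the single population term $n\log\mathbb{E}_{z\sim p(z)}\mathbb{E}_{x\sim\pi}\mathbb{E}_{y\sim\zeta}e^{(\cdots)}$; and a continuity ``bridge'' that uses Lemma~\ref{lemma:cvae} to cope with the fact that the posterior is conditioned on the test pair. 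The two boxed terms are the genuinely new contributions and will fall out of the bridge step.

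\textbf{Continuity bridge (the crux).} The difficulty is that $q_\phi(z\mid x,y)$ is a \emph{data-dependent} posterior: it is conditioned on the very point at which the population risk is evaluated, which classical PAC-Bayes forbids. First I would rewrite the left-hand side with fresh integration variables $(x',y')\sim\pi\times\zeta$ and, for each training index $i$, insert the empirical posterior $q_\phi(z|x_i,y_i)$. Since Lemma~\ref{lemma:cvae} establishes $\ell^\theta(\cdot,\cdot,y')\in\mathcal{E}$ (it is $K_\theta$-Lipschitz on $\mathcal{Z}\times\mathcal{X}$ for every fixed $y'$), the definition of the IPM $d_\mathcal{E}$ gives
\begin{equation}
\mathbb{E}_{z\sim q_\phi(z|x',y')}\ell^\theta(z,x',y')-\mathbb{E}_{z\sim q_\phi(z|x_i,y_i)}\ell^\theta(z,x_i,y') \leq d_\mathcal{E}\big(q_\phi(z|x',y'),q_\phi(z|x_i,y_i)\big).
\end{equation}
Applying the continuity bound of Lemma~\ref{lemma:cvae}, taking $\mathbb{E}_{x'\sim\pi}\mathbb{E}_{y'\sim\zeta}$, and averaging over $i$ reproduces the inherited $\tfrac{\lambda K_\phi K_\theta}{n}\sum_i\mathbb{E}_{y\sim\zeta}d(y,y_i)$ term together with the two new boxed terms $\tfrac{\lambda K_\phi K_\theta}{n}\sum_i\mathbb{E}_{x\sim\pi}d(x,x_i)$ and $\tfrac{\lambda K_\theta}{n}\sum_i\mathbb{E}_{x\sim\pi}d(x,x_i)$, the last of which is precisely the decoder's Lipschitz dependence on $x$ surfacing through the conditioning ``tag'' carried by $d_\mathcal{E}$.

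\textbf{Change of measure and concentration.} After the bridge, every remaining posterior is conditioned on training points only, so standard PAC-Bayes applies. I would invoke the Donsker--Varadhan inequality for each $i$ with prior $p(z)$ and exponent $\tfrac{\lambda}{n}\big(\mathbb{E}_{x'\sim\pi}\mathbb{E}_{y'\sim\zeta}\ell^\theta(z,x',y')-\ell^\theta(z,x_i,y_i)\big)$; summing over $i$ and rearranging produces the $\sum_i D_{KL}$ terms and a sum of per-sample log-moment-generating functions. Because $S_x$ and $S_y$ are drawn i.i.d., the exponential of this sum factorizes over $i$, and Fubini lets me move the $z\sim p(z)$ expectation outside each factor; Markov's inequality on the resulting product then gives, with probability at least $1-\delta$, the $\log\tfrac{1}{\delta}$ term and the single population moment term. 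Combining this with the bridge bound and dividing by $\lambda$ yields the stated inequality.

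\textbf{Main obstacle.} The one step that is not bookkeeping is the bridge: I must verify that $\ell^\theta(\cdot,\cdot,y')$ genuinely lies in $\mathcal{E}$ so the IPM inequality is legitimate, and I must track the $x$-conditioning carefully so that the decoder's Lipschitz dependence on $x$ appears exactly once---as the separate $K_\theta\Vert x-x_i\Vert$ term---rather than being omitted or double counted. Everything downstream (change of measure, the i.i.d.\ factorization, and Markov) is identical to the VAE argument of \cite{mbacke2024statistical} with $(x,y)$ in place of $y$.
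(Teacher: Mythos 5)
Your overall architecture---continuity bridge via Lemma~\ref{lemma:cvae}, Donsker--Varadhan change of measure onto the $n$ empirical posteriors with prior $p(z)$, and i.i.d.\ factorization of the exponential moment plus Markov---is the same route the paper indicates (the paper gives only a proof idea, deferring to Theorem 3.1 of \cite{mbacke2024statistical}). However, there is a genuine seam problem at exactly the step you call the crux. Your bridge inequality compares $\mathbb{E}_{z\sim q_\phi(z|x',y')}\,\ell^\theta(z,x',y')$ against $\mathbb{E}_{z\sim q_\phi(z|x_i,y_i)}\,\ell^\theta(z,x_i,y')$, i.e.\ you shift the loss's $x$-argument to $x_i$ along with the posterior; this is precisely what activates the $K_\theta\Vert x'-x_i\Vert$ ``tag'' in $d_\mathcal{E}$ and produces the second boxed term. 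But after taking $\mathbb{E}_{x'\sim\pi}\mathbb{E}_{y'\sim\zeta}$ and averaging over $i$, you are left holding $\frac{1}{n}\sum_i\mathbb{E}_{z\sim q_\phi(z|x_i,y_i)}\mathbb{E}_{y'\sim\zeta}\,\ell^\theta(z,x_i,y')$, whereas your Donsker--Varadhan step, whose exponent is $\frac{\lambda}{n}\bigl(\mathbb{E}_{x'\sim\pi}\mathbb{E}_{y'\sim\zeta}\,\ell^\theta(z,x',y')-\ell^\theta(z,x_i,y_i)\bigr)$, bounds $\frac{1}{n}\sum_i\mathbb{E}_{z\sim q_\phi(z|x_i,y_i)}\mathbb{E}_{x'\sim\pi}\mathbb{E}_{y'\sim\zeta}\,\ell^\theta(z,x',y')$. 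These are different functions of $z$ ($\ell^\theta(z,x_i,y')$ versus $\ell^\theta(z,x',y')$), so the chain of inequalities does not compose. Repairing it by the further Lipschitz conversion $\mathbb{E}_{y'}\ell^\theta(z,x_i,y')\le\mathbb{E}_{x'}\mathbb{E}_{y'}\ell^\theta(z,x',y')+K_\theta\mathbb{E}_{x'}d(x_i,x')$ costs a second $K_\theta$ term, i.e.\ it double counts exactly the term you promised to count once.

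The clean fix is to \emph{not} shift the $x$-argument in the bridge: compare $\mathbb{E}_{z\sim q_\phi(z|x',y')}\,\ell^\theta(z,x',y')$ with $\mathbb{E}_{z\sim q_\phi(z|x_i,y_i)}\,\ell^\theta(z,x',y')$. For fixed $(x',y')$ the map $z\mapsto\ell^\theta(z,x',y')$ is $K_\theta$-Lipschitz, so Kantorovich duality together with Eq.~\ref{eq:w2_ineq} bounds this difference by $K_\phi K_\theta\left(\Vert x'-x_i\Vert+\Vert y'-y_i\Vert\right)$ only; then your Donsker--Varadhan step with the stated exponent composes exactly and yields the theorem's exponential-moment term. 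On this derivation the second boxed term $\frac{\lambda K_\theta}{n}\sum_i\mathbb{E}_{x\sim\pi}d(x,x_i)$ is never generated---it is nonnegative slack, and the theorem as stated follows a fortiori. In other words, the ``appears exactly once'' accounting you aim for cannot be realized together with the fully integrated exponential moment: the coherent derivations give either zero extra $K_\theta$ terms with $\mathbb{E}_{x'}\mathbb{E}_{y'}$ inside the exponent, or one extra $K_\theta$ term but with the exponent's population risk evaluated at the fixed outer $x$ rather than integrated over $x'\sim\pi$. Your write-up, as it stands, silently mixes the two.
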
  
The major difference between this theorem and Theorem~\ref{thm:PAC-VAE} is that the inequality includes new terms due the decoder being conditioned on $x$ (boxed above). The second term necessitates adding an additional term to the CVAE loss function.  

\subsubsection{\normalfont \textit{Proof Idea.}} \vspace{-1em}
The proof of Theorem~\ref{thm:PAC-CVAE} follows the proof of Theorem 3.1 in \cite{mbacke2024statistical}, which uses standard PAC-Bayes techniques with one main difference. The proof starts with $n$ \emph{i.i.d.} samples from the prior $p(z)$, which allows us to apply the Donsker-Varadhan change of measure theorem to $n$ posteriors $q(z |x_i,y_i)$. Then, we show that the exponential moment with $n$ latent spaces instead of one is equal to the exponential moment obtained with one latent space. 

Similar to the VAE objective, a crucial assumption underlying the concept of PAC-Bayes bounds is the ability to draw \emph{i.i.d.} samples. Ideally, an agent (or algorithm) would exhaustively sample all regions of the data distribution simultaneously and produce a function from the ensemble of data samples. This is equivalent to offline learning, where the learning process has access to a complete dataset before training. When we instead consider an embodied learning process, an agent must navigate the exploration domain to gather samples in a time-ordered sequential sampling process. In general, such a sampling process does not produce \emph{i.i.d.} data~\cite{Dean2020}. However, ergodicity can provide a method of producing \emph{i.i.d.} data with Birkhoff's ergodic theorem~\cite{Moore2015}, which we restate below:

\begin{figure}[tb]      
    \centering
    \includegraphics[width=0.75\columnwidth]{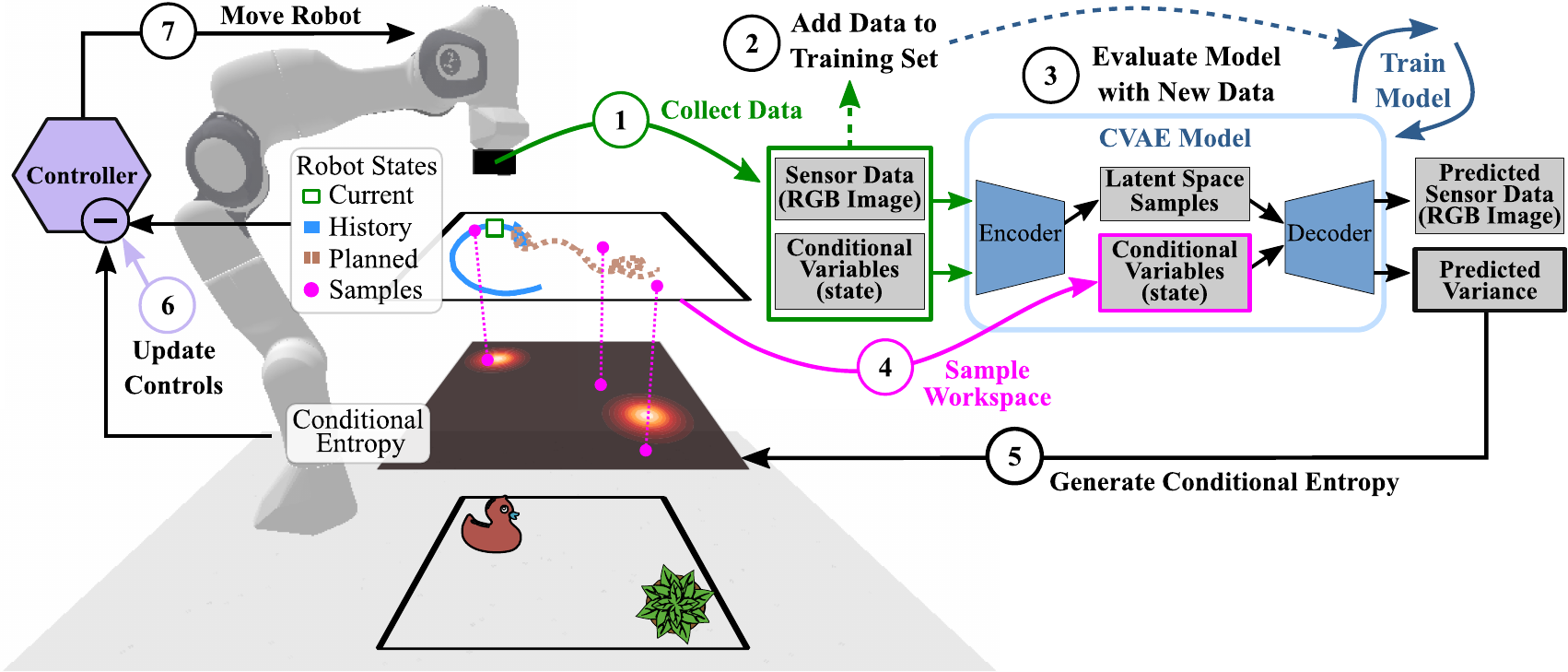}
    \caption{
    \textbf{Active Learning Process}
    Numerals 1-7 describe the steps completed during each active learning step.
    The layers below the robot arm show, (I)~previously visited states, future planned states, and samples drawn from the reachable workspace (see Eq.~\ref{eq:time_average_statistics}), (II)~conditional entropy of the model over the workspace samples (see Eq.~\ref{eq:cond-ent}), and (III)~objects in the workspace. 
    Model training occurs in parallel to data collection, so the robot trajectory, model, dataset, and conditional entropy all evolve continually.
    }\label{fig:learning} 
    \vspace{-1em}
\end{figure}

\begin{theorem}[Birkhoff's Ergodic Theorem]\label{thm:birkhoff}
    Let $\{x_t\}_{t\in N}$ be an aperiodic and irreducible Markov process on a state space $X$ with invariant measure $\rho$ and let $f : \mathcal{X}\!\rightarrow\!\mathbb{R}$ be any measurable function with $\mathbb{E}[|f (x)|] < \infty$. Then, 
    \begin{equation}
        \lim_{N\rightarrow\infty} \frac{1}{N}\sum\nolimits_{t=1}^{N}f(x_t) = \mathbb{E}_{x_0\sim\rho}[f(x_0)]     
    \end{equation}
almost surely.
\end{theorem}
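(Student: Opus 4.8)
The plan is to reduce this Markov-chain statement to the classical pointwise ergodic theorem for a measure-preserving dynamical system and then identify the limit using ergodicity. First I would construct the canonical path space $\Omega=\mathcal{X}^{\mathbb{N}}$ with the shift map $T$ defined by $(T\omega)_t=\omega_{t+1}$. Initializing the chain at its invariant distribution $\rho$ induces a measure $\mathbb{P}_\rho$ on $\Omega$, and invariance of $\rho$ makes $T$ measure-preserving, so $(\Omega,T,\mathbb{P}_\rho)$ is a measure-preserving system. Writing $\pi_0$ for the projection onto the first coordinate, the observable $g=f\circ\pi_0$ lies in $L^1(\mathbb{P}_\rho)$ precisely because $\mathbb{E}[|f(x)|]<\infty$, and the time average in the theorem equals the Birkhoff average $\frac{1}{N}\sum_{t=1}^{N} g\circ T^{t}$.

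The key structural step is to show that $(\Omega,T,\mathbb{P}_\rho)$ is \emph{ergodic}, i.e. every $T$-invariant event has $\mathbb{P}_\rho$-probability $0$ or $1$. Here I would use the hypotheses directly: irreducibility together with the finite invariant measure $\rho$ gives positive recurrence and makes $\rho$ the unique stationary distribution, while aperiodicity rules out residual periodic behavior; together these force the shift-invariant $\sigma$-algebra to be $\mathbb{P}_\rho$-trivial. Concretely, a $T$-invariant event corresponds to a bounded harmonic function of the trajectory, and for an irreducible recurrent chain every bounded harmonic function is a.s. constant, which is exactly ergodicity of $T$.

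With measure-preservation and ergodicity established, I would invoke the pointwise ergodic theorem: for any $g\in L^1(\mathbb{P}_\rho)$ the Birkhoff averages converge $\mathbb{P}_\rho$-a.s. to the conditional expectation of $g$ onto the invariant $\sigma$-algebra, and ergodicity collapses that conditional expectation to the constant $\int g\, d\mathbb{P}_\rho$. Applying this to $g=f\circ\pi_0$ yields $\frac{1}{N}\sum_{t=1}^{N}f(x_t)\to\int f\circ\pi_0\, d\mathbb{P}_\rho=\mathbb{E}_{x_0\sim\rho}[f(x_0)]$ almost surely, which is the claim.

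The main obstacle is the pointwise (almost-sure) convergence itself rather than the reduction. Mean convergence in $L^2$ is cheap via von Neumann's mean ergodic theorem and the orthogonal projection onto the $T$-invariant subspace, but upgrading to almost-sure convergence for a merely $L^1$ observable requires the \emph{maximal ergodic theorem}: I would prove the maximal inequality (e.g. by Garsia's short argument), use it to bound the measure of the set where $\limsup$ and $\liminf$ of the averages disagree, and finish by approximating $g$ with bounded functions. A self-contained alternative tailored to the Markov structure, which sidesteps the maximal inequality, is a \emph{regeneration} argument: split the trajectory at successive returns to a fixed atom (or a small set via Nummelin splitting), note that the excursion blocks are i.i.d., apply the ordinary strong law to the block sums and block lengths, and use renewal theory to recover the limit $\int f\, d\rho$ as the ratio of the expected per-cycle sum to the expected cycle length.
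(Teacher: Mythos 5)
The paper itself offers no proof of this statement: Theorem~\ref{thm:birkhoff} is quoted as a classical result (with a citation to the ergodic-theory literature) and used only to justify that ergodic exploration produces data with \emph{i.i.d.}-like time-average properties. So there is no in-paper argument to compare against; what can be assessed is whether your outline would stand on its own, and it does. Your reduction is the standard textbook route: pass to the canonical path space with the shift map, note that stationarity of $\rho$ makes the shift measure-preserving, prove ergodicity of the shift-invariant $\sigma$-algebra via the correspondence between invariant events and bounded harmonic functions (constant for irreducible recurrent chains), and then invoke the pointwise ergodic theorem, whose genuinely hard ingredient you correctly isolate as the maximal ergodic inequality; the regeneration/Nummelin-splitting alternative you sketch is also a complete and standard substitute. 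Two refinements are worth noting. First, aperiodicity is not actually needed for this conclusion: irreducibility together with the finite invariant measure (hence positive recurrence) already forces triviality of the invariant $\sigma$-algebra, so your phrasing that aperiodicity helps ``force'' ergodicity is superfluous --- periodicity obstructs mixing and distributional convergence, not Birkhoff averages; since aperiodicity is hypothesized anyway, this is cosmetic. Second, be explicit about the initial law: the path-space argument as you set it up proves almost-sure convergence under $\mathbb{P}_\rho$, i.e.\ for the chain started in stationarity (which matches the $x_0\sim\rho$ in the statement); to get the conclusion from an arbitrary starting state --- which is what an embodied agent beginning at a fixed configuration actually needs --- one must use your regeneration argument or Harris recurrence, so that alternative is not merely optional but is the version relevant to the paper's application.
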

Informally, theorem states that the time average of any function of an ergodic Markov chain is  equal to its ensemble average.
We can use ergodic control strategies from ~\cite{mavrommati2018coverage,abraham2020kle3} to generate data with \emph{i.i.d.} properties asymptotically.

\section{Methods}\label{sec:experiments}
With embodied learning, how informative the model is depends on what data the robot has collected. 
Previous work by \cite{prabhakar2022mechanical} showed ergodic control can be used to efficiently collect data for learning predictive sensor models in simulation; models were evaluated based on sensor reconstruction  quality. In this work, we aim to learn information-rich latent spaces to enable the models to be used for tasks like object identification.
Our experimental pipeline contains three parts, which are shown on the right side of Fig.~\ref{fig:franka-real}---active learning, object fingerprinting, and object identification. The \hyperlink{appendix}{Appendix} contains additional implementation details.

\subsubsection{Active Learning}\label{sec:active-learning} \vspace{-1em}
Our active learning process couples the robot data collection process (exploration) with the CVAE model learning process. 
Alg.~\ref{alg:learning} and the numeral bubbles in Fig.~\ref{fig:learning} describe the process completed during each robot exploration step---1)~collect data, 2)~add data to training set, 3)~evaluate the CVAE model for latest collected data, 4)~sample workspace, 5)~generate conditional entropy distribution using predicted decoder variance evaluated at each workspace sample, 6)~update controller to minimize the difference between visited states and conditional entropy, and 7)~move robot to new state. 
This cycle is repeated until learning is complete. Model training occurs in parallel to data collection, so the robot trajectory, model, and conditional entropy are all continually evolving.

\subsubsection{Control Formulation} \vspace{-1em}
We use the method from \cite{prabhakar2022mechanical}, which incorporates model uncertainty into the ergodic control strategy by specifying the target distribution as the conditional entropy distribution of model uncertainty over the possible search space. We define the CVAE conditional entropy distribution as
\begin{equation}\label{eq:cond-ent} 
    h(s|x,y)=\exp {\left(\mathbb{H}(p_\theta (y|z,s))\right)}^k 
\end{equation}
where $\mathbb{H}$ is the entropy of the decoder network $p_\theta$ given a latent space conditioned on the current robot state and sensor data $z\sim{q_\phi(z|x,y)}$ and $k$ is an effective temperature parameter that exponentially weighs regions of high importance. 

{\centering
\begin{minipage}{\linewidth}
\begin{algorithm}[H]
\fontsize{8}{10}\selectfont 
\caption{\small Active Learning Process}
\label{alg:learning}
\begin{algorithmic}[1]
\State Randomly initialize CVAE encoder $q_\phi(z|x,y)$ and decoder $p_\theta(y|z,x)$. Initialize memory buffer $\mathcal{D}$, prediction horizon $H$, state coverage $\Sigma$, and latent space regularization scale $\chi$. 
\While{task not done}
\State Collect current state $x_c$ and sensor data $y_c$ and add to buffer $ \mathcal{D} \gets \mathcal{D}  + \{ x_c, y_c \} $ 
\State Generate latent space samples for current data $z_c \sim  q_\phi \left( z | x_c, y_c \right)$
\State Draw $S$ states from reachable workspace 
\State Draw $N$ previous states from buffer and predict future states over horizon $H$
\State Update distributions over workspace samples $s$
\State \quad $ k \leftarrow \frac{1}{S}\sum_{i=1}^S\max\limits_{x \in \mathcal{D}} \left( \exp \left(-\frac{1}{2}{\parallel}s_i-x{\parallel}_{{{{\Sigma }}}^{-1}}^{2}\right)\right)$   \Comment{Workspace Coverage}
\State \quad $h(s) \leftarrow 
   \exp {\left(\mathbb{H}(p_\theta\left(y|z_c , s \right)\right)}^k $ \Comment{Conditional Entropy Distribution}
\State \quad $ d(s) \leftarrow \sum_{t=1}^{N+H} \exp \left(-\frac{1}{2}{\parallel}s-x_t{\parallel }_{{{{\Sigma }}}^{-1}}^{2}\right)$ \Comment{Trajectory Distribution}
\State Update controller by minimizing $ D_{KL}(h(s) \parallel d(s))$ 
\State Update CVAE with mini-batches from $\mathcal{D}$ \hfill $\star$ Can be asynchronous
\State \quad $ \begin{matrix} \log_{10}\beta & , & \gamma \end{matrix} \leftarrow \begin{matrix}
    -\log_{10}\left(\min \left({h}/{\max{h}}\right) \right) - \chi & , &  0.1 k
\end{matrix}$ \Comment{Hyperparameters}  \label{line:hyperparams}
\State  \quad 
$ {\phi,\theta} \leftarrow $ Update CVAE parameters with gradients from $\mathcal{L}$ in Eq.~\ref{eq:cvae-loss-new}
\State Apply next action from controller to move robot
\EndWhile
\end{algorithmic}
\end{algorithm}
\end{minipage}
\vspace{0.5em}
}

One way to interpret the conditional entropy distribution is as the expected information density. Areas with high conditional entropy are \emph{very interesting} and areas of low conditional entropy are \emph{not very interesting}. 
By collecting data from the environment which matches the conditional entropy distribution, we  satisfy the \emph{i.i.d.} requirements of the CVAE learning bounds. 
We use the KL-ergodic measure $D_{KL}(h \Vert d)$ to assess the data collection quality over time.

\subsubsection{Embodied CVAE Loss Formulation}\label{sec:cvae-loss} \vspace{-1em}
To satisfy CVAE PAC-Bayes Bounds from Theorem~\ref{thm:PAC-CVAE}, we augment the CVAE loss function from Eq.~\ref{eq:cvae-loss} with an additional conditional term such that
\begin{equation}\label{eq:cvae-loss-new}
\begin{split}
    \mathcal{L}(\theta,\phi;x,y)
    = \, &\underbrace{ \mathbb{E}_{q_\phi(z|x,y)}[\log p_\theta(y|z,x)]}_\text{end-to-end reconstruction loss} - \beta \underbrace{ D_{KL}(q_\phi(z|x,y) \parallel p(z)) }_\text{latent space regularization} \\
    & + \gamma \underbrace{ \mathbb{E}_{q_\phi(z|x,y)}[\log p_\theta(\hat{y}|z,\hat{x})]}_\text{conditional reconstruction loss} 
\end{split}
\end{equation}
where $\beta$ and $\gamma$ are hyperparameters. 
Both the end-to-end reconstruction loss and the conditional reconstruction loss use the same encoded latent space, but they use different conditional states, which are indicated with the hat notation. 
We use properties of the conditional entropy distribution and workspace coverage to dynamically tune the hyperparameters from Eq.~\ref{eq:cvae-loss-new} 
as listed in Line~\ref{line:hyperparams} of Alg.~\ref{alg:learning}. 
$\beta$ increases as the model discriminates objects, and $\gamma$ increases as more data is collected throughout the search space.
\begin{figure}[tb]      
    \centering
    \includegraphics[width=0.9\columnwidth]{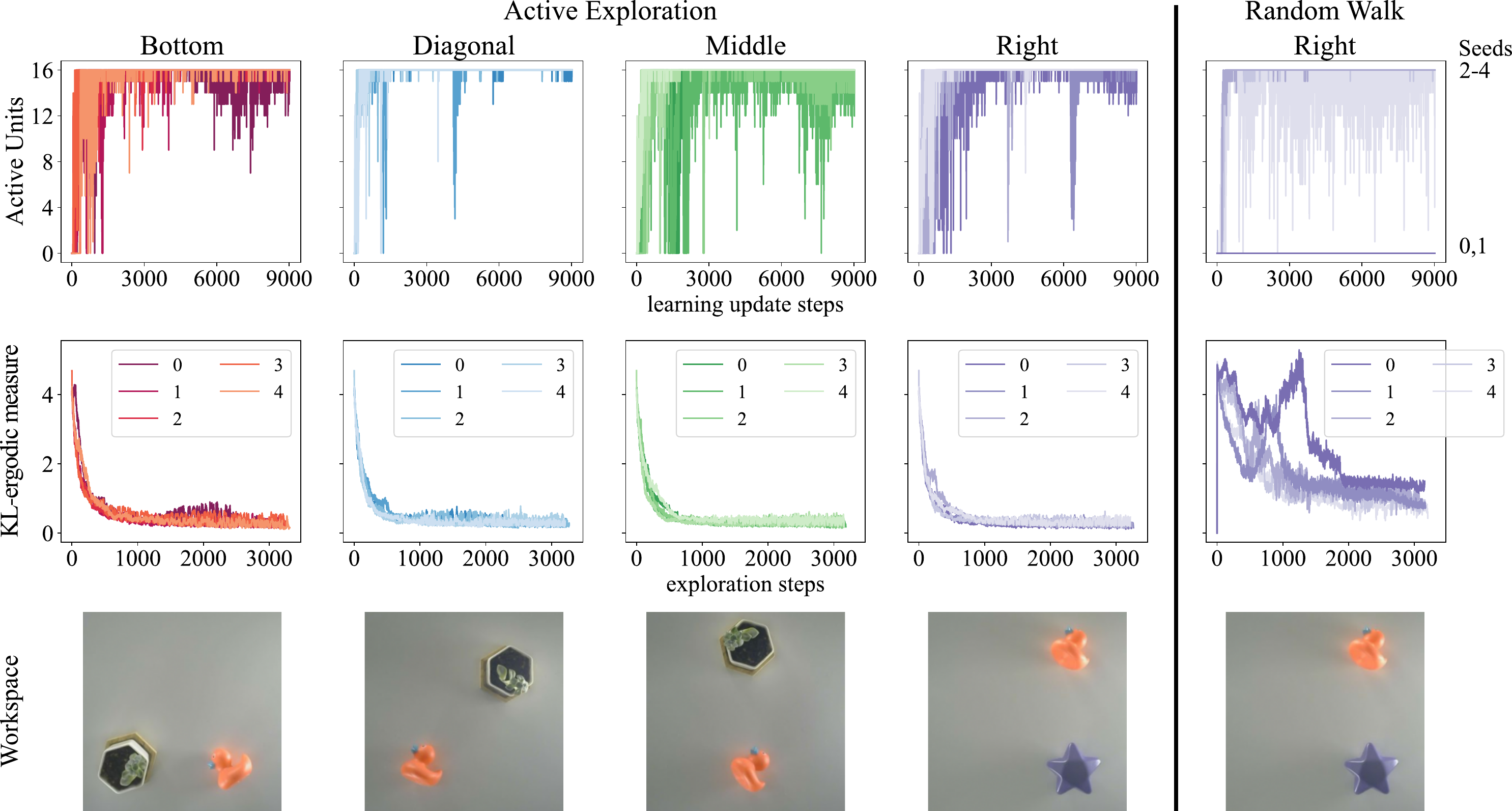}
    \caption{
    \textbf{Learning Metrics.} The first four columns show active exploration for different learning environments with 5 different seeds. 
    The last column shows data collected with a random walk.
    The top row shows the number of active units for each mini-batch update. Zero indicates latent space collapse, and 
    the higher the number of active units, the more the latent space is being used by the decoder. 
    The middle row shows the ergodic metric for each exploration step---representing how well the collected data matches the conditional entropy distribution. 
    An ergodic measure of zero would mean the collected data exactly matches the target distribution. 
    The bottom row shows a top-down view of each learning workspace.
    These plots show for all active seeds, the number of active units remained high and collected data quickly converged to the conditional entropy distribution. For the random walk, two seeds experienced latent space collapse and the data collected did not match the conditional entropy distribution. 
    }\label{fig:results} 
    \vspace{-1.5em}
\end{figure}
\subsubsection{Active Units} \vspace{-1em}
An aim of this work is to actively learn latent features of unknown objects. 
To quantify the latent space active units, we use the method from~\cite{truong2021bilateral}:
\begin{equation}
\text{AU} =  \sum\nolimits^L_{l=1} \mathbf{1}[ \text{Var}_{\mathcal{S}}({\mu}_{(z,l)})> \tau], \label{eq:active-units}
\end{equation}
where L is the size of the latent space, ${\mu}_z$ are the latent space means, $\mathcal{S}$ a subset of the collected data points, $\text{Var}(\cdot)$ is the variance across $\mathcal{S}$, $\mathbf{1}[\cdot]$ is an indicator function, and $\tau$ is the active unit threshold.
In Section~\ref{sec:vae}, we discussed latent space posterior collapse. When a latent space fully collapses, the latent space contains no active units. The larger the number of active units, the more dimensions of the latent space are being used by the decoder.

\subsubsection{Ablation} \vspace{-1em}
Although no ablation study is executed here, ablation plays an important role in benchmarking learning systems, typically in simulation. Hardware ablation studies should assess the energy expenditure of the algorithm, the rate at which the system learns, and the information content of the learned models over time. Future ablation studies could include replacing active exploration with alternate data collection methods, removing dynamic tuning of the loss hyperparameters ($\beta,\gamma$)\footnote{Hyperparameters could be set to fixed values or follow a pre-determined ramp.}, and testing curated sets of test objects\footnote{For example, testing ducks with different patterns and objects with different ratios of object size to camera field of view.} 

\subsubsection{Object Fingerprinting} \vspace{-1em}
Our method identifies learned objects in the environment by clustering samples from the conditional entropy distribution.\footnote{See the \hyperlink{appendix}{Appendix} for more details.} The robot collects representative data (sensor data, states, and latent spaces) for each identified learning object. We refer to each object dataset as a \emph{fingerprint}.

\subsubsection{Object Identification} \vspace{-1em}
During object identification, the learned CVAE model and object fingerprints are used as measurement models to test data collected from the new environment. For each identification test environment, a new model is learned from scratch using the active learning method from Alg.~\ref{alg:learning}. Data collected by the new model is processed by each learned measurement model and incorporated into belief grids for each learned object.

\section{Results}\label{sec:numerical-results}
\begin{wrapfigure}{r}{0.35\textwidth}
    \centering
    \vspace{-3.5em}
    \includegraphics[width=0.3\columnwidth]{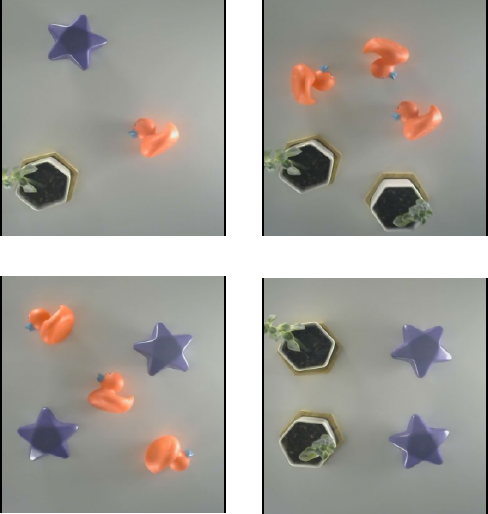} 
    \caption{ \centering
    \textbf{Test Workspaces}
    }\label{fig:id} 
    \vspace{-2.5em}
\end{wrapfigure}

The experiments are designed to highlight the ability of our method to actively learn information-rich latent spaces in an embodied framework.
Fig.~\ref{fig:results} shows the learning metrics for four different training workspaces, each containing two objects. For each workspaces, we test five different algorithmic seeds (20 tests total). All active models learn information-rich latent spaces and quickly collected data consistent with their conditional entropy distributions. The last column of Fig.~\ref{fig:results} shows data collected with a random walk, which collapses for 2/5 seeds. 

After training, we generate fingerprints for each actively learned object. We first test these fingerprints against their respective training environments. Then, we test these fingerprints in new workspaces with extra objects shown in Fig.~\ref{fig:id}. For comparison purposes, we collect 2000 test images for each identification test and test all 60 collected fingerprints on the same images.\footnote{We get 60 fingerprints from 4 test envs. $\times$ 5 seeds per env. $\times$ 3 fingerprints per env.} Therefore, the data quality of the identification process is solely the result of objects in the environment, not the belief about the presence of particular fingerprints.

Entries in Table~\ref{tab:results} summarize the number of objects tested across all seeds in the form $\sfrac{\text{success}}{\text{total tested}}$. For learning environments, the denominator of 5 denotes the 5 tested seeds. 
For test environments, the denominator includes product of the number of objects in the test environments and the number of stored fingerprints of that object type.\footnote{E.g. only one training env. has a star, so the denominator of the last entry in the first row is 5. Stars \& plants contains 2 stars, so the denominator~for star is 10.} 
Any entry with 0/0 indicates that object was not present in the identification environment. 
Blank spaces were omitted from the evaluation of the test environment due to number of objects present. 
The final column and final row contain total percentages by test environment and object respectively.
These results show that all fingerprints successfully 
\renewcommand{\tabcolsep}{0.3em}
\renewcommand{\arraystretch}{1.1} 
\begin{table}[bth]
    \centering
    \caption{\textbf{Identification Results for 2000 Images}. (Left) Training results show all active learning fingerprints successfully identified all test objects. (Right) Test results show that over 75\% of the test object were successfully identified by the fingerprints.}\label{tab:results}
    \begin{tabular}{l|c|c|c|c||l|c|c|c|c} 
     & \multicolumn{4}{c||}{Objects} &  & \multicolumn{3}{c|}{Objects}& \itshape  Total\\
    Training & Duck & Plant & Star & Blank  & \multicolumn{1}{c|}{Test}   & Duck & Plant & Star  & \itshape  by Env.\\ \hline 
    Bottom & 5/5 & 5/5 & 0/0 & 5/5 & Star,Plant,Duck & 18/20 & 14/15 & 4/5 & \itshape  90\%\\
    Diagonal & 5/5 & 5/5 & 0/0  & 5/5 & Plants \& Ducks & 54/60 & 28/30 & 0/0  & \itshape  92\%\\
    Middle &  5/5 & 5/5 & 0/0 & 5/5 & Stars \& Ducks & 47/60 & 0/0 & 8/10 & \itshape  79\%\\
    Right & 5/5 & 0/0 & 5/5 & 5/5 &  Stars \& Plants & 0/0 & 24/30 & 9/10 & \itshape  83\% \\ \hline
    \hfill \itshape Total &  \itshape 100\% & \itshape 100\% & \itshape 100\% & \itshape 100\%  & \hfill \itshape  Total by Object & \itshape 85\% & \itshape 88\% & \itshape 85\% & 
    \end{tabular}
    \vspace{-1.5em}
\end{table}
identified the learned objects in their training environments and over $75\%$ of all fingerprints  successfully identified the learned objects the test environments. 
By allowing the identification process to actively collect data where the fingerprint beliefs are uncertain, this percentage could be increased. 

\section{Discussion and Conclusion}\label{sec:discussion}

Limitations of the method described here could be addressed in future work.
For active exploration, if the modeled dynamics substantially differ from the actual system, the robot will not be able to collect the planned data, and the model could collapse. Additionally, the presented results use a fixed number of exploration steps. Future work could implement real-time clustering to detect when the latent space has stabilized as a termination criteria. 
For fingerprinting, if objects are too close together, they could be clustered into a single object. To mitigate this limitation, the robot would need the ability to move in additional dimensions or with an interaction capability.
Another limitation is that fingerprint samples are currently selected with a normal distribution around the fingerprint center, so fingerprints could contain redundant data. Fingerprinting could be improved by implementing active coverage in the fingerprint generation phase. 
Finally, the current belief grids have a fixed number of points per conditional state dimension, so a different belief representation may be required for tests with more conditional variables. 

Our method here does not require pre-existing datasets, offline compute, or communication to the cloud to learn latent vision perception models. Instead, by exploiting control and the subsequent ability to shape the data it acquires, the robot regulates the learning process through data collection, reasoning about needed data based on the evolving state of the CVAE at every learning iterate. In contrast to next-best-view, our approach creates a sequence of high quality views at every time step, with subsequent guarantees on diversity of data (since no two views can be the same) and domain coverage. Moreover, our approach does not suffer from the catastrophic collapse of the latent space one gets when using randomized search across the domain. The method creates generative models of the domain, which can be used to separate features into distinct objects. 
Furthermore, the generative models can then be used to search for the objects. 
Lastly, as might be expected from a technique that uses continuous feedback from the learning model as it evolves, performance varies little from experiment to experiment---every data collection trial leads to generative models with information-rich latent spaces for search and identification. 
Experiments in physical hardware confirm predicted results.

{ \small
\noindent \textbf{Acknowledgements} {This material is supported by ONR Grant N00014-21-1-2706 and Army Research Office Grant W911NF-22-1-0286. Any opinions, findings and conclusions or recommendations expressed in this material are those of the authors and do not necessarily reflect the views of this institution.}  \vspace{-1em} } 


\bibliographystyle{splncs03}
\bibliography{references}

\section*{APPENDIX}\hypertarget{appendix} 
Robot communication was handled by a laptop (Intel\textregistered~Core i7-8550U CPU @ 1.80GHz x 8) running Ubuntu 18.04, Realtime Kernel (version 5.4.138-rt62), and ROS Melodic~\cite{Quigley09ros}. All other processes were run on a separate server (Intel\textregistered~Xeon(R) Platinum 8380 CPU @ 2.30GHz x 160) with Ubuntu 20.04 and ROS Noetic. Computers were connected via Ethernet on a local network. 

\renewcommand{\tabcolsep}{0.3em}
\renewcommand{\arraystretch}{1.1} 
\begin{table}[tbh]
    \centering
    \scriptsize
    \caption{CVAE Network Architecture}
    \begin{tabular}{|lr|l|} \hline
    {\itshape Key }
    & CNN & 
    \hfill \itshape 2D Conv. Filters @ Kernel (stride, pad) $+$ Activation  \\
    & MLP &
    \hfill \itshape Fully Connected Output $+$ Activation  \\
     \multicolumn{2}{|r|}{ Latent Space Samples} & \hfill $z_s = \mu_z + \sigma_z\epsilon $ where  $ \epsilon \sim \mathcal{N}(\mathbf{0},\mathbf{I}) $ and $ z = \mathcal{N}(\mu_z,\text{diag}(\sigma_z) $ \\
    \hline \hline 
    {\bfseries Sensor} 
    & Input & 3x180x180 \hfill \itshape $\triangleright$ RGB Image    \\
    {\bfseries Encoder} 
    & CNN &
        10@3x3(2,0) + ReLU $\rightarrow$ 
        10@3x3(2,0) + ReLU $\rightarrow$ 
        20@5x5(3,0) 
    \\  
    & Output & 20x14x14 \hfill \itshape $\triangleright$ Encoded Image \\
    \hline
    {\bfseries Main } 
    & Input & 3923 \hfill \itshape $\triangleright$ Flattened Encoded Image $+$ State $(x,y,\theta)$   \\
    {\bfseries Encoder } 
    & MLP & 
        512 + ReLU  $\rightarrow$ 
        265 + ReLU $\rightarrow$  
        32 
    \\ 
    & Output & 32 \hfill \itshape $\triangleright$ Latent Space $ (z) $ \\ \hline \hline
    {\bfseries Main  } 
    & Input & 19 \hfill \itshape $\triangleright$ Latent Space Samples $+$ State $(x,y,\theta)$ \\
    {\bfseries Decoder  } 
    & MLP &  
        256 + ReLU  $\rightarrow$ 
        512 + ReLU $\rightarrow$ 
        3921 
    \\ 
    & Output & 3921  \hfill \itshape $\triangleright$ Sensor Prediction $+$ Variance   \\ \hline
    {\bfseries Sensor } 
    & Input & 20x14x14  \hfill \itshape $\triangleright$ UnFlattened Decoder Sensor Prediction \\ 
    {\bfseries Decoder} 
    & CNN &
        20@5x5(3,0) + ReLU $\rightarrow$
        10@3x3(2,0) + ReLU $\rightarrow$
        10@3x3(2,1) 
    \\  
    & Output & 3x180x180 \hfill \itshape $\triangleright$ RGB image \\ \hline
    \end{tabular}\label{tab:cvae-params}
    \vspace{-1.5em}
\end{table}

\begin{wrapfigure}{r}{0.38\textwidth}
    \centering
    \vspace{-0.75em}
    \includegraphics[width=0.36\columnwidth]{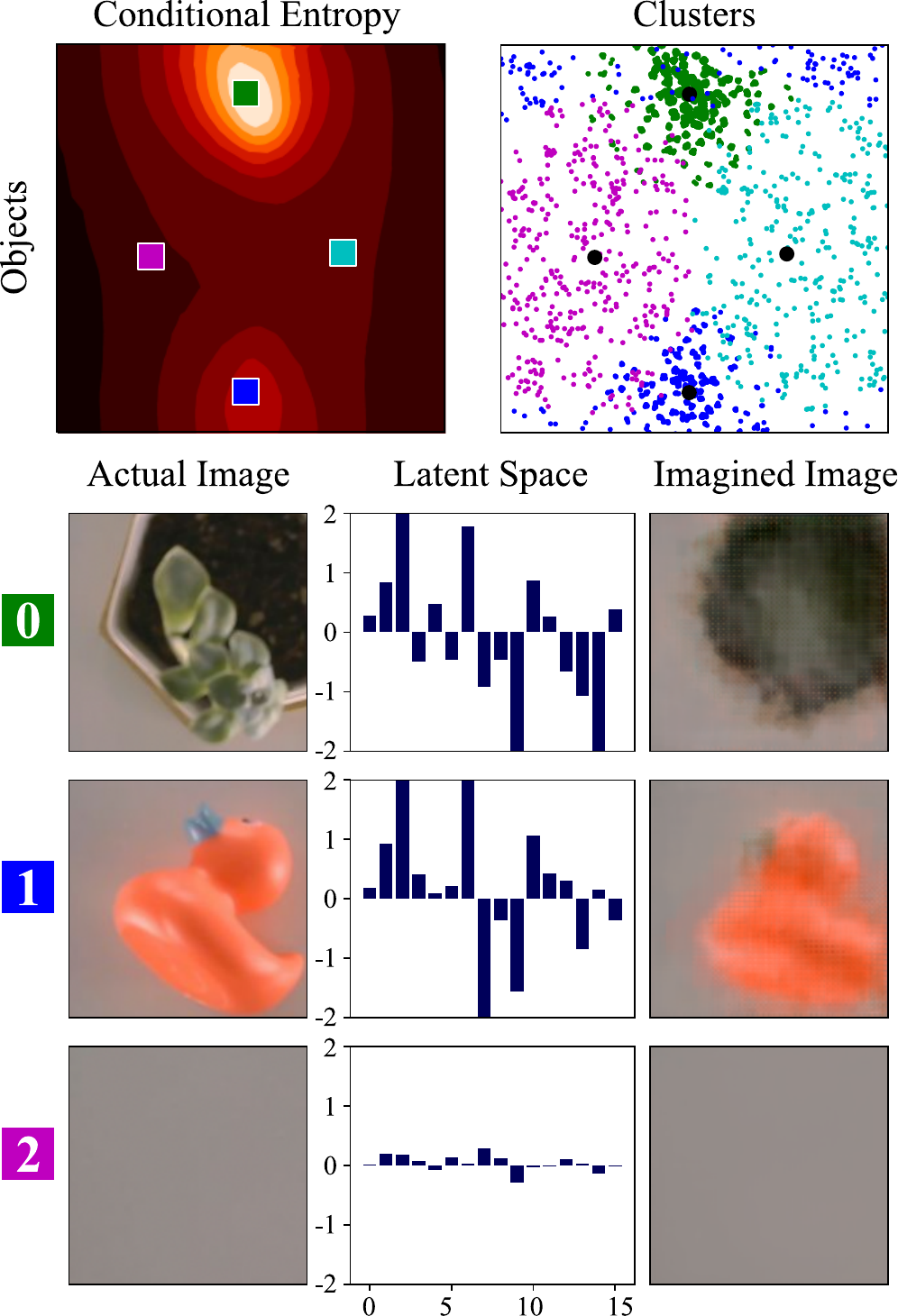} 
    \vspace{-.5em}
    \caption{\fontsize{8}{10}\selectfont 
    \textbf{Object Clustering.} (Top Left) conditional entropy distribution, squares are object locations; (Top Right) clustered conditional entropy samples; colors match the objects in the left plot. (Bottom Rows) data collected for 3 objects. 
    }\label{fig:clustering} 
    \vspace{-3.5em}
\end{wrapfigure}
\subsubsection{Models} \vspace{-1.5em}
All networks in  Table~\ref{tab:cvae-params} are implemented in PyTorch v2.0~\cite{pytorch_NEURIPS2019_9015}. We use the Adam~\cite{kingma2014adam} optimizer with a learning rate of $0.001$, a batch size of $64$, and $4$ distributed trainers. Distributed training (optional) allows asynchronous training updates and robot control updates. Training frequency is throttled to 3 model updates per data collection step. 
For tuning $\beta$, we use $\chi = 4$ (i.e. $\min\beta=10^{-4}$).

\subsubsection{Control} \vspace{-1.25em}
End-effector dynamics are modeled as a double integrator. We assume states are controlled independently and use barrier functions to enforce workspace boundaries. 
We use a $10$ step planning horizon and a time step of $0.2$s. 

\subsubsection{Object Fingerprinting} \vspace{-1.25em}
We use importance sampling to generate conditional entropy and use Mean Shift Clustering to find object locations~\cite{comaniciu2002meanshift}. ``Blank'' regions are valid objects. For each object, the robot generates a \emph{fingerprint}---a table of images, states, and latent space samples. We store 50 images per object. Fig.~\ref{fig:clustering} shows an object clustering example. 

\subsubsection{Object Identification} \vspace{-1.25em}
During identification, regularly spaced belief grids are evaluated for each fingerprint. Each grid-point has a mean and a standard deviation representing how likely it is that a particular object is present the test workspace. The following are completed for each test image: 1) the stored CVAE encoder generates latent space samples for collected image paired with each stored fingerprint state; 2) the L2 distance between stored latent space and each new latent space is calculated; 3) a likelihood grid is generated using the transform between test environment and most likely measurement; 4) the belief grid is updated using Bayes rule. For these tests, we use the minimum distance between fingerprints as the distance threshold for the measurement model. 

\end{document}